\newtheorem{theorem}{Theorem}
\newtheorem{Corollary}{Corollary}
\newtheorem{Lemma}{Lemma}
\newtheorem{Definition}{Definition}
\newtheorem{Remark}{Remark}
\newtheorem{Assumption}{Assumption}
\definecolor{arash}{rgb}{0.8,0.8,1}
\definecolor{seb}{rgb}{0.8,1,0.8}
\definecolor{seb2}{rgb}{0.5,.5,1}
\definecolor{arash2}{rgb}{0,.5,0}
\definecolor{wenqi}{rgb}{1,.75,0.79}
\definecolor{wenqi2}{rgb}{1,.75,0.79}
\DeclareMathOperator*{\argminC}{\arg\min}
\newcommand{\Argmin}[1]{\displaystyle \argminC_{#1}}
\newcommand{\Lim}[1]{\displaystyle \lim_{#1}}
\newcommand{\vect}[1]{\ensuremath{\boldsymbol{\mathrm{#1}}}}
\newcommand{\biggg}{\bBigg@{1.6}}  
\definecolor{seb}{rgb}{0.8,1,0.8}
\definecolor{arash}{rgb}{0.8,0.8,1}
\newcounter{lastnote}
\title{\LARGE \bf
Quasi-Newton Iteration in Deterministic Policy Gradient
}
\author{Arash Bahari Kordabad, Hossein Nejatbakhsh Esfahani, Wenqi Cai, Sébastien Gros
\thanks{The authors are with Department of Engineering Cybernetics, Norwegian University of Science and Technology (NTNU), Trondheim, Norway. E-mail:{\tt\small \{Arash.b.kordabad, hossein.n.esfahani, wenqi.cai, sebastien.gros\}@ntnu.no}}
 }
\begin{document}
\bstctlcite{IEEEexample:BSTcontrol}
\maketitle
\thispagestyle{empty}
\pagestyle{empty}

\begin{abstract} This paper presents a model-free approximation for the Hessian of the performance of deterministic policies to use in the context of Reinforcement Learning based on Quasi-Newton steps in the policy parameters. We show that the approximate Hessian converges to the exact Hessian at the optimal policy, and allows for a superlinear convergence in the learning, provided that the policy parametrization is rich. The natural policy gradient method can be interpreted as a particular case of the proposed method. We analytically verify the formulation in a simple linear case and compare the convergence of the proposed method with the natural policy gradient in a nonlinear example.
\end{abstract}
\section{INTRODUCTION}
Markov Decision Processes (MDPs) provide the standard framework for (stochastic) control problem. The Bellman equations provide the exact solution for a given MDP, and can be solved via Dynamic Programming (DP) \cite{bertsekas1995dynamic}. Unfortunately, this is impractical because of the \textit{curse of dimensionality of DP}. In practice, Reinforcement learning (RL) provides model-free tools to obtain an approximate solutions for the MDPs.
\par Deterministic policy gradient algorithms are widely used in RL with continuous action spaces \cite{bertsekas2019reinforcement}. These methods attempt to learn the optimal parameters of a parameterized policy $\vect \pi_{\vect \theta}$ using only state transitions observed on the real system. These methods commonly use gradient descent methods to optimize a discounted sum of stage costs, called closed-loop performance $J(\vect\theta)$. Depending on the policy type, these approaches are divided into the deterministic and the stochastic policy gradient methods. In the stochastic policy gradient methods, a parametrized distribution of action $\vect a$ conditioned on each state $\vect s$ taking the form of $\vect\pi_{\vect\theta}(\vect a|\vect s)$ is considered, while deterministic policy methods use $\vect a=\vect \pi_{\vect \theta}(\vect s)$ to specify a deterministic action for each state $\vect s$. Both methods adjust the parameter vector $\vect\theta$ in order to optimize $J$. In practice, stochastic policy gradient may need more data when the action space has many dimensions \cite{silver2014deterministic}. Hence, in this paper we focus on the deterministic policies. 
\par Unfortunately, the convergence rate of classical gradient descent is limited, especially when the Hessian of closed-loop performance $J$ is far from a scalar multiple of the Identity matrix~\cite{nocedal2006numerical}. In \cite{fazel2018global}, the global convergence of policy gradient methods has been investigated for the Linear Quadratic Regulator (LQR) problems. Various studies propose to use the Hessian of the policy performance in a Newton-type methods in order to deliver a faster learning \cite{furmston2016approximate}. 
\par Natural policy gradient methods has been attracted many attentions in RL community recently due to its capability for better convergence~\cite{hansel2021benchmarking}. The efficiency of the natural policy gradient in RL was showed in \cite{Amari1998}. The natural policy gradient methods use the \textit{Fisher information matrix} as an approximate Hessian~\cite{kakade2002natural}. In \cite{ding2020natural}, a natural policy gradient method is developed for Constrained MDPs. A Quasi-Newton method is developed in \cite{givchi2015quasi} for Temporal Difference (TD) learning in order to get faster convergence. Natural Actor-critic has been investigated in \cite{peters2005natural}. Although the Fisher information matrix, as an approximation for the Hessian, is positive definite, it does not asymptotically converge to the exact Hessian necessarily, when the policy converges to the optimal policy~\cite{hansel2021benchmarking}. As a result, the rate of convergence of the natural policy gradient method is linear, i.e., the same as the regular gradient descent \cite{furmston2016approximate}. Therefore, providing an approximation of the Hessian (without imposing heavy computation) that converges to the exact Hessian at the optimal policy can improve the convergence rate. 
\par In this paper, we first derive a formulation for exact Hessian of deterministic policy performance with respect to the parameters. Then we provide a model-free approximation for the Hessian of the performance function $J$. We show that the approximate Hessian converges to the exact Hessian at the optimal policy when the parameterized policy is rich. As a result, it gives a superlinear convergence using a Quasi-Newton optimization.
\section{Hessian of the Policy Performance}\label{sec:Hessian}
In the RL context, the problem is assumed to be an MDP with an initial state distribution $p_1 (\vect s_0)$ and transition probability density $p(\vect s^{+}|\vect s,\vect a)$ where $\vect s \in \mathcal{S}\subseteq\mathbb{R}^{n_s}$, $\vect a \in \mathcal{A}\subseteq\mathbb{R}^{n_a}$, and $\vect s^+$ are the current state, input, and subsequent state, respectively, and $\vect s_0$ is the initial state. Every transition imposes a real scalar stage cost $\ell(\vect s,\vect a)$.  A deterministic policy denoted by $\vect \pi : \mathcal{S} \rightarrow \mathcal{A}$ specifies how the input $\vect a$ is chosen for each state $\vect s$. We consider a parametrized policy $\vect\pi_{\vect\theta}$ with parameter vector $\vect\theta \in \mathbb{R}^{n_{\theta}}$ and seek an optimal policy by adjusting parameter $\vect\theta$. The value function $V^{\vect\pi_{\vect\theta}}$ and action-value function $Q^{\vect\pi_{\vect\theta}} (\vect s,\vect a)$ are defined as follows:
\begin{subequations}\label{eq:bellman}
\begin{align}
Q^{\vect\pi_{\vect\theta}} (\vect s,\vect a) &= \ell(\vect s,\vect a)+\gamma\mathbb{E}_{p(\cdot|\vect s,\vect a)}\left[V^{\vect\pi_{\vect\theta}} (\vect s^+)|\vect s,\vect a\right], \\
V^{\vect\pi_{\vect\theta}} (\vect s) &= Q^{\vect \pi_{\vect\theta}}(\vect s,\vect\pi_{\vect\theta}(\vect s)),
\end{align}
\end{subequations}
where $\gamma\in (0, 1]$ is a discount factor. The performance objective $J(\vect\theta)$ is given as follows:
\begin{align} \label{eq:J}
J(\vect\theta) =\mathbb{E}_{\vect s_0}\left[V^{\vect \pi_{\vect\theta}}(\vect s_0)\right]=\mathbb{E}_{\vect s}\left[\ell(\vect s,\vect \pi_{\vect \theta}(\vect s))\right].
\end{align}
Note that we simplified the expectation notation $\mathbb{E}_{\vect s_0\sim p_1(\vect s_0)}[\cdot]=\mathbb{E}_{\vect s_0}[\cdot]$ and $\mathbb{E}_{\vect s}[\cdot]$ is taken over the expected sum of the discounted state distribution of the Markov chain in closed-loop with policy $\vect\pi_{\vect\theta}$. The purpose is solving the following optimization problem:
\begin{align} \label{eq:minJ}
\vect \theta^\star \in \Argmin{\vect \theta} J(\vect \theta).
\end{align}

In the following we make an assumption in order to guarantee the existence of the policy gradient and we recall the deterministic policy gradient theorem.
\begin{Assumption} \label{Assup1}
$p(\vect s'|\vect s,\vect a)$, $\nabla_{\vect a} p(\vect s'|\vect s,\vect a)$, $\vect \pi_{\vect\theta}(\vect s)$, $\nabla_{\vect\theta} \vect\pi_{\vect\theta}(\vect s)$, $\ell(\vect s,\vect a)$, $\nabla_{\vect a} \ell(\vect s,\vect a)$, $p_1(\vect s)$ are continuous in all parameters and variables $\vect s$, $\vect a$, $\vect s'$, $\vect \theta$. Also there exist $b$ and $L$ such that: 
\begin{align}
    &\sup_{\vect s}  p_1(\vect s)<b, \qquad\qquad \sup_{\{\vect a,\vect s,\vect s'\}}p(\vect s'|\vect s,\vect a)<b,\nonumber \\ &\sup_{\{\vect a,\vect s\}}\|\nabla_{\vect a} \ell(\vect s,\vect a)\|<L,\,\,\, \sup_{\{\vect a,\vect s,\vect s'\}}\|\nabla_{\vect a} p(\vect s'|\vect s,\vect a)\|<L.
\end{align}
Moreover, there exists a policy $\vect\pi_{\vect\theta}$ such that $J(\vect\theta)$ is finite.
\end{Assumption}
Assumption \ref{Assup1} is a standard assumption which is made in \cite{silver2014deterministic} in order to derive policy gradients. All derivatives are also bounded for a smooth enough $p$, such as the Gaussian distribution. Moreover, one can select the initial state distribution from a  bounded probability function. The existence of a policy that makes  the performance $J(\vect\theta)$ finite can be interpreted as a controllability assumption in the control literature. Policy gradient methods usually solve \eqref{eq:minJ} using gradient descent method, i.e., at each iteration $k$, we update $\vect\theta$ as follows:
\begin{align}
    \vect\theta_{k+1}=\vect\theta_k-\alpha \nabla_{\vect\theta} J(\vect\theta)|_{\vect\theta=\vect\theta_k},
\end{align}
where $\alpha$ is a positive step-size.
\begin{theorem}(Deterministic Policy Gradient)
Suppose that the MDP satisfies Assumption \ref{Assup1}; then $\nabla_{\vect a} Q^{\vect \pi_{\vect \theta}}$ exists and the deterministic policy gradient reads as:
\begin{align}\label{eq:dj}
    \nabla_{\vect \theta} J(\vect \theta) &=\mathbb{E}_{\vect s} \Big[\nabla_{\vect \theta} \vect \pi_{\vect \theta}(\vect s)\nabla_{\vect a} Q^{\vect \pi_{\vect \theta}}(\vect s,\vect a)\big|_{\vect a=\vect \pi_{\vect \theta}(\vect s)}\Big].
\end{align}
\end{theorem}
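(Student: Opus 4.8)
The plan is to follow the classical propagation-and-unrolling argument, adapted to deterministic policies as in \cite{silver2014deterministic}. First I would check that $\nabla_{\vect a} Q^{\vect\pi_{\vect\theta}}$ is well defined. Writing \eqref{eq:bellman} explicitly, $Q^{\vect\pi_{\vect\theta}}(\vect s,\vect a) = \ell(\vect s,\vect a) + \gamma\int p(\vect s^+|\vect s,\vect a)\,V^{\vect\pi_{\vect\theta}}(\vect s^+)\,\mathrm{d}\vect s^+$, and the continuity of $\ell$, $\nabla_{\vect a}\ell$, $p$, $\nabla_{\vect a} p$ together with the uniform bounds $b,L$ and the finiteness of $V^{\vect\pi_{\vect\theta}}$ from Assumption \ref{Assup1} allow differentiation under the integral sign (the Leibniz rule with a dominating function), giving $\nabla_{\vect a} Q^{\vect\pi_{\vect\theta}}(\vect s,\vect a) = \nabla_{\vect a}\ell(\vect s,\vect a) + \gamma\int \nabla_{\vect a} p(\vect s^+|\vect s,\vect a)\,V^{\vect\pi_{\vect\theta}}(\vect s^+)\,\mathrm{d}\vect s^+$.

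Next I would differentiate the Bellman identity $V^{\vect\pi_{\vect\theta}}(\vect s)=Q^{\vect\pi_{\vect\theta}}(\vect s,\vect\pi_{\vect\theta}(\vect s))$ in $\vect\theta$, keeping track of the two ways $\vect\theta$ enters: through the argument $\vect\pi_{\vect\theta}(\vect s)$ and through the superscript, i.e., the future policy. The chain rule and the expansion of $Q^{\vect\pi_{\vect\theta}}$ above yield the fixed-point relation
\begin{align*}
\nabla_{\vect\theta} V^{\vect\pi_{\vect\theta}}(\vect s) &= \nabla_{\vect\theta}\vect\pi_{\vect\theta}(\vect s)\,\nabla_{\vect a} Q^{\vect\pi_{\vect\theta}}(\vect s,\vect a)\big|_{\vect a=\vect\pi_{\vect\theta}(\vect s)} \\
&\quad + \gamma\int p(\vect s^+|\vect s,\vect\pi_{\vect\theta}(\vect s))\,\nabla_{\vect\theta} V^{\vect\pi_{\vect\theta}}(\vect s^+)\,\mathrm{d}\vect s^+,
\end{align*}
where the exchange of $\nabla_{\vect\theta}$ with the expectation over $p$ is again licensed by the boundedness and continuity in Assumption \ref{Assup1}.

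Finally I would unroll this recursion: letting $\rho^{\vect\pi_{\vect\theta}}(\vect s\!\to\!\vect s',k)$ denote the $k$-step transition density of the closed-loop Markov chain, repeated back-substitution gives $\nabla_{\vect\theta} V^{\vect\pi_{\vect\theta}}(\vect s)=\sum_{k=0}^{\infty}\int \gamma^{k}\rho^{\vect\pi_{\vect\theta}}(\vect s\!\to\!\vect s',k)\,\nabla_{\vect\theta}\vect\pi_{\vect\theta}(\vect s')\,\nabla_{\vect a} Q^{\vect\pi_{\vect\theta}}(\vect s',\vect a)\big|_{\vect a=\vect\pi_{\vect\theta}(\vect s')}\,\mathrm{d}\vect s'$, and then integrating against $p_1(\vect s_0)$ and collapsing $\sum_k\gamma^k\int p_1\,\rho^{\vect\pi_{\vect\theta}}(\cdot\!\to\!\vect s',k)$ into the discounted state occupancy measure underlying $\mathbb{E}_{\vect s}[\cdot]$ produces \eqref{eq:dj}. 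The main obstacle is not the algebra but the analysis that legitimizes it: one must show the unrolled series converges and that summation, integration over the state space, and the $\vect\theta$-derivative may all be interchanged. This is exactly where Assumption \ref{Assup1} is used — the uniform bounds $b$ and $L$ bound each summand by a constant multiple of $\gamma^{k}$, so for $\gamma<1$ dominated convergence and the Fubini--Tonelli theorem apply termwise; the boundary case $\gamma=1$ relies instead on the assumed existence of a policy making $J$ finite, which keeps the occupancy measure integrable.
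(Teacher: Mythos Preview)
Your proposal is correct and is precisely the propagation-and-unrolling argument of \cite{silver2014deterministic}, which is all the paper invokes for this theorem (its proof consists solely of the reference ``See in \cite{silver2014deterministic}''). Both the derivation of the fixed-point relation for $\nabla_{\vect\theta}V^{\vect\pi_{\vect\theta}}$ and the subsequent collapse into the discounted occupancy expectation match that source, so there is nothing to add.
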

\begin{proof}
See in \cite{silver2014deterministic}.
\end{proof}
The next standard assumption will be made to ensure the  existence of the Hessian of the policy with respect to the policy parameters $\vect\theta$ and the Hessian of action-value function with respect to the input $\vect a$.
\begin{Assumption} \label{Assup1.5}
$\nabla^2_{\vect a} p(\vect s'|\vect s,\vect a)$, $\nabla^2_{\vect \theta} \vect \pi_{\vect \theta}(\vect s)$, $\nabla^2_{\vect a} \ell(\vect s,\vect a)$, are continuous in all parameters and variables $\vect s$, $\vect a$, $\vect s'$, $\vect \theta$. Moreover, there exists $M$ such that: \\
\begin{align}
    \sup_{\vect a,\vect s,\vect s'}\|\nabla^2_{\vect a} p(\vect s'|\vect s,\vect a)\|<M,\,\,\,\,\sup_{\vect a,\vect s}\|\nabla^2_{\vect a} \ell(\vect s,\vect a)\|<M.
\end{align}
\end{Assumption}
Similar to the assumption \ref{Assup1}, assumption \ref{Assup1.5} is made to derive the Hessian of the performance. In practice, the assumption is satisfied for a smooth enough transition $p$, policy $\vect\pi$ and stage cost $\ell$.
In the following we provide the exact Hessian of the deterministic policy performance with respect to the policy parameters.
\begin{Definition}
In this paper, we use the operation $\otimes:\mathbb{R}^{n_1\times n_2\times n_3}\times\mathbb{R}^{n_3}\rightarrow \mathbb{R}^{n_1\times n_2}$ for the product of a tensor $T$ and a vector $\vect v$, such that:
\begin{align}
    T \otimes \vect v \triangleq \sum_{i=1}^{{n_3}} v_{{i}}T_{{(:,:,i)}},
\end{align}
where scalar $v_{{i}}$ is the $i^{\mathrm{th}}$ element of vector $\vect v$ and matrix $[T_{(:,:,i)}]_{n_1\times n_2}$ is the $i^{\mathrm{th}}$ frontal slice of tensor $T$~\cite{braman2010third}.
\end{Definition}
\begin{theorem}(Deterministic Policy Hessian)\label{th:J''}
Under Assumptions \ref{Assup1} and \ref{Assup1.5}, $\nabla^2 _{\vect a} Q^{\vect \pi_{\vect \theta}}$ and the deterministic policy Hessian exist. The latter is given by:
\begin{align}\label{eq:J''}
\nabla^2_{\vect \theta} J(\vect \theta)= H(\vect \theta)+\gamma \Lambda(\vect\theta),
\end{align}
where $H(\vect\theta)$ and $\Lambda(\vect\theta)$ are defined as follows:
\begin{subequations}
\begin{align}
      H(\vect \theta) \overset{\Delta}{=} & \mathbb{E}_{\vect s} \Big[ \nabla_{\vect \theta}^2 \vect \pi_{\vect \theta}(\vect s) \otimes \nabla_{\vect a} Q^{\vect \pi_{\vect \theta}}(\vect s,\vect a)\Big|_{\vect a=\vect \pi_{\vect \theta}}+\label{eq:H} \\  &\quad\, \nabla_{\vect \theta} \vect \pi_{\vect \theta}(\vect s)\nabla_{\vect a}^2 Q^{\vect \pi_{\vect \theta}}(\vect s,\vect a)\Big|_{\vect a=\vect \pi_{\vect \theta}}\nabla_{\vect \theta} \vect \pi_{\vect \theta}(\vect s)^\top\Big],\nonumber\\
      \Lambda(\vect\theta)\overset{\Delta}{=}&  \mathbb{E}_{\vect s}\bigg[\int  \nabla_{\vect \theta} p(\vect s'|\vect s,\vect \pi_{\vect \theta} (\vect s)) \nabla_{\vect \theta} V^{\vect \pi_{\vect \theta}}(\vect s')^\top \mathrm{d}\vect s'+ \label{eq:Lam} \\ &\quad \int \nabla_{\vect \theta} V^{\vect \pi_{\vect \theta}}(\vect s')  \nabla_{\vect \theta} p(\vect s'|\vect s,\vect \pi_{\vect \theta} (\vect s))^\top\mathrm{d}\vect s'\bigg].  \nonumber
\end{align}
\end{subequations}
\end{theorem}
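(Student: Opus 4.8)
The plan is to imitate the derivation of the deterministic policy gradient \eqref{eq:dj}, but to push the differentiation one order further. First I would establish a Bellman-type recursion for the gradient $\vect g(\vect s)\triangleq\nabla_{\vect\theta}V^{\vect\pi_{\vect\theta}}(\vect s)$: inserting $V^{\vect\pi_{\vect\theta}}(\vect s)=\ell(\vect s,\vect\pi_{\vect\theta}(\vect s))+\gamma\int p(\vect s'|\vect s,\vect\pi_{\vect\theta}(\vect s))V^{\vect\pi_{\vect\theta}}(\vect s')\,\mathrm{d}\vect s'$ from \eqref{eq:bellman}, differentiating under the integral sign (justified by Assumption~\ref{Assup1}), and using the chain rule $\nabla_{\vect\theta}p(\vect s'|\vect s,\vect\pi_{\vect\theta}(\vect s))=\nabla_{\vect\theta}\vect\pi_{\vect\theta}(\vect s)\,\nabla_{\vect a}p(\vect s'|\vect s,\vect a)|_{\vect a=\vect\pi_{\vect\theta}}$, one obtains
\[
\vect g(\vect s)=\nabla_{\vect\theta}\vect\pi_{\vect\theta}(\vect s)\,\nabla_{\vect a}Q^{\vect\pi_{\vect\theta}}(\vect s,\vect a)\big|_{\vect a=\vect\pi_{\vect\theta}}+\gamma\int p(\vect s'|\vect s,\vect\pi_{\vect\theta}(\vect s))\,\vect g(\vect s')\,\mathrm{d}\vect s',
\]
which is precisely the recursion behind the deterministic policy gradient theorem; unrolling it and taking $\mathbb{E}_{\vect s_0}[\,\cdot\,]$ returns \eqref{eq:dj}.

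Next I would differentiate this identity once more in $\vect\theta$. Put $\vect\phi(\vect s)\triangleq\nabla_{\vect\theta}\vect\pi_{\vect\theta}(\vect s)\,\nabla_{\vect a}Q^{\vect\pi_{\vect\theta}}(\vect s,\vect a)|_{\vect a=\vect\pi_{\vect\theta}}$ and $\Xi(\vect s)\triangleq\nabla^2_{\vect\theta}V^{\vect\pi_{\vect\theta}}(\vect s)$. The product rule applied to the $\vect g$-recursion gives
\[
\Xi(\vect s)=\nabla_{\vect\theta}\vect\phi(\vect s)+\gamma\int\Big(\vect g(\vect s')\,\nabla_{\vect\theta}p(\vect s'|\vect s,\vect\pi_{\vect\theta}(\vect s))^{\top}+p(\vect s'|\vect s,\vect\pi_{\vect\theta}(\vect s))\,\Xi(\vect s')\Big)\,\mathrm{d}\vect s'.
\]
The crux is expanding $\nabla_{\vect\theta}\vect\phi(\vect s)$ via the Bellman equation for $Q^{\vect\pi_{\vect\theta}}$: its inner $\vect a$-gradient depends on $\vect\theta$ in three ways. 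Differentiating $\nabla_{\vect\theta}\vect\pi_{\vect\theta}(\vect s)$ itself produces the tensor term $\nabla^2_{\vect\theta}\vect\pi_{\vect\theta}(\vect s)\otimes\nabla_{\vect a}Q^{\vect\pi_{\vect\theta}}|_{\vect a=\vect\pi_{\vect\theta}}$; differentiating the evaluation point $\vect a=\vect\pi_{\vect\theta}(\vect s)$ produces $\nabla_{\vect\theta}\vect\pi_{\vect\theta}(\vect s)\,\nabla^2_{\vect a}Q^{\vect\pi_{\vect\theta}}|_{\vect a=\vect\pi_{\vect\theta}}\,\nabla_{\vect\theta}\vect\pi_{\vect\theta}(\vect s)^{\top}$, where I use that $V^{\vect\pi_{\vect\theta}}(\vect s')$ is $\vect a$-independent so that $\nabla^2_{\vect a}Q^{\vect\pi_{\vect\theta}}=\nabla^2_{\vect a}\ell+\gamma\int\nabla^2_{\vect a}p\cdot V^{\vect\pi_{\vect\theta}}\,\mathrm{d}\vect s'$ (this, through Assumption~\ref{Assup1.5}, also yields the asserted existence of $\nabla^2_{\vect a}Q^{\vect\pi_{\vect\theta}}$); and differentiating $V^{\vect\pi_{\vect\theta}}(\vect s')$ inside the integral produces, again by the chain rule for $\nabla_{\vect\theta}p$, the term $\gamma\int\nabla_{\vect\theta}p(\vect s'|\vect s,\vect\pi_{\vect\theta}(\vect s))\,\nabla_{\vect\theta}V^{\vect\pi_{\vect\theta}}(\vect s')^{\top}\mathrm{d}\vect s'$. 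The first two contributions are exactly the integrand $h(\vect s)$ of \eqref{eq:H}.

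Collecting everything, the cross term $\nabla_{\vect\theta}p\,\vect g^{\top}$ arising from $\nabla_{\vect\theta}\vect\phi$ and its transpose $\vect g\,\nabla_{\vect\theta}p^{\top}$ arising from the product rule combine into the symmetric integrand $\lambda(\vect s)$ of \eqref{eq:Lam}, so the recursion collapses to
\[
\Xi(\vect s)=h(\vect s)+\gamma\,\lambda(\vect s)+\gamma\int p(\vect s'|\vect s,\vect\pi_{\vect\theta}(\vect s))\,\Xi(\vect s')\,\mathrm{d}\vect s'.
\]
This has the same form as the first-order recursion, with matrix forcing term $h+\gamma\lambda$ in place of $\vect\phi$; unrolling it and applying $\mathbb{E}_{\vect s_0}[\,\cdot\,]$ folds the $t$-step transition kernels into the discounted state distribution, so $\nabla^2_{\vect\theta}J(\vect\theta)=\mathbb{E}_{\vect s_0}[\Xi(\vect s_0)]=\mathbb{E}_{\vect s}[h(\vect s)]+\gamma\,\mathbb{E}_{\vect s}[\lambda(\vect s)]=H(\vect\theta)+\gamma\Lambda(\vect\theta)$, namely \eqref{eq:J''}.

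The main obstacle is not the algebra but the analysis underpinning it: interchanging $\nabla_{\vect\theta}$ and $\nabla_{\vect a}$ with the state integrals and establishing convergence of the unrolled series for $\Xi$. This is where the uniform bounds $b,L,M$ of Assumptions~\ref{Assup1} and~\ref{Assup1.5}, together with the finiteness of $J$, are indispensable: they provide integrable dominating functions for differentiation under the integral sign and uniform-in-$\vect s$ bounds on $\nabla^2_{\vect\theta}\vect\pi_{\vect\theta}$, $\nabla^2_{\vect a}Q^{\vect\pi_{\vect\theta}}$ and $\nabla_{\vect\theta}V^{\vect\pi_{\vect\theta}}$ (the last bootstrapped from the first-order estimates in \cite{silver2014deterministic}), so that the factor $\gamma^{t}$ controls the tail of the series. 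A minor, purely notational point is keeping the tensor/matrix index conventions consistent when forming $\nabla^2_{\vect\theta}\vect\pi_{\vect\theta}(\vect s)\otimes(\cdot)$ and the rank-one blocks $\vect g(\vect s')\,\nabla_{\vect\theta}p(\cdot)^{\top}$; with the $\otimes$ convention from the Definition fixed, this is routine bookkeeping.
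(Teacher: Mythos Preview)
Your proposal is correct and follows essentially the same route as the paper: both derive the fixed-point identity $\nabla^2_{\vect\theta}V^{\vect\pi_{\vect\theta}}(\vect s)=\mathcal F_{\vect\theta}(\vect s)+\gamma\int p(\vect s'|\vect s,\vect\pi_{\vect\theta}(\vect s))\nabla^2_{\vect\theta}V^{\vect\pi_{\vect\theta}}(\vect s')\,\mathrm d\vect s'$ with $\mathcal F_{\vect\theta}=h+\gamma\lambda$, unroll it over the $t$-step kernels, and integrate against $p_1$ to obtain $H(\vect\theta)+\gamma\Lambda(\vect\theta)$. The only cosmetic difference is that you reach the recursion by differentiating the first-order $\vect g$-identity, whereas the paper applies $\nabla^2_{\vect\theta}$ directly to the Bellman equation and then regroups; the algebra and the unrolling step are identical.
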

\begin{proof}
See Appendix.
\end{proof}
\par The terms in \eqref{eq:H} only depend on the policy and the action-value function, but the terms in \eqref{eq:Lam} depend on the gradient of the transition probability $p(\vect s'|\vect s,\vect a)$, which is difficult to calculate directly from data. Hence, we use $H(\vect \theta)$ as a model-free approximator of the exact Hessian $\nabla^2_{\vect\theta}J$. Next section, we will 
show that the approximate Hessian $H(\vect \theta)$ converges to the exact Hessian $\nabla^2_{\vect\theta}J$ at the optimal policy.
\begin{Remark}
Note that one can approximate $p(\vect s'|\vect s,\vect a)$ from observed data in order to obtain a more accurate Hessian, e.g., using system identification techniques~\cite{martinsen2020combining}. Such an estimation can require a heavy computation if the state-action space of the problem is not small. Hence, in order to provide a model-free approximator and for sake of brevity we ignore such evaluation in this paper.
\end{Remark}
\section{Quasi-Newton Policy Improvement}\label{sec:quasi}
Quasi-Newton methods are alternative to Newton's approach where the Hessian of the cost function is unavailable or too expensive to compute at every iteration. A  Quasi-Newton update rule for the optimization problem \eqref{eq:minJ} can be written as follows:
\begin{align}\label{eq:QN}
\vect \theta_{k+1}=\vect \theta_k-\alpha H^{-1}(\vect \theta_k) \nabla_{\vect \theta} J(\vect \theta)|_{\vect \theta=\vect \theta_k},
\end{align}
where $H$ is an approximation of Hessian of the performance function $J$. Note that using a Hessian in the policy optimization is advantageous when the different parameters would require very different step sizes in a first-order method, i.e., when $\nabla^2J$ is far from being a multiple of the identity matrix. This is often the case in practice, unless a pre-scaling is performed on the policy formulation. From the computational viewpoint, the Hessian of a policy is usually dense, and it can be troublesome to use in \eqref{eq:QN} for a policy parametrization using a very large number of parameters. Hence the proposed second-order method is arguably best for policies using a few dozens, up to a few hundreds of parameters. E.g., policy parametrizations based on model predictive control techniques fall in that range of parameters~\cite{gros2019data}. Next mild assumptions are made to allow one to use the Newton-type optimization in the policy gradient methods. 
\begin{Assumption} \label{assum3} 
\begin{enumerate}
    \item The parameterized policy $\vect \pi_{\vect \theta}$ is rich enough. I.e., there exists $\vect\theta^\star$ such that $\vect \pi_{\vect \theta^\star}(\vect s)=\vect \pi^\star(\vect s)$.
\item     $J(\vect \theta)$ has a Lipschitz continuous Hessian and $\nabla^2_{\vect\theta}J(\vect \theta)^{-1}$ exists in a neighbourhood of $\vect \theta^\star$.
\end{enumerate}
\end{Assumption}
The first statement of Assumption \ref{assum3} is a standard assumption in the theoretical developments associated to the policy gradient method. For instance, for a Linear dynamic with Quadratic cost, a policy in the form of  $\vect\pi_{\vect\theta}(\vect s)=\Theta_1\vect s+\Theta_2$ with proper matrix dimension $\Theta_1$ and $\Theta_2$ satisfies Assumption \ref{assum3}.1, where $\vect\theta=\{\Theta_1,\Theta_2\}$. In practice, for a general problem such assumption is satisfied approximately by choosing a generic function approximator for the deterministic policy, e.g., Deep Neural Networks~\cite{franccois2018introduction} and Fuzzy Neural Networks~\cite{bahari2020emotional}. Then a richer policy satisfies the assumption asymptotically. A key consequence of this assumption is that the optimal policy $\vect\pi^\star$ is independent of the distribution of the initial state $p_1(\vect s_0)$. The second statement guarantees the continuity of the Hessian and allows one to use a Quasi-Newton approach.
\begin{Lemma}\label{lemma:f0}
Assume that $\vect f:\mathbb{R}^n \rightarrow \mathbb{R}^m$ is a bounded, continuous function of $\vect x\in\mathbb{R}^n$ and for any probability density $ g(\vect x)$, we have
 $   \mathbb{E}_{\vect x\sim  g}[\vect f(\vect x)]=\vect 0.$
Then
$
    \vect f(\vect x)= \vect 0 
$ holds almost everywhere in Lebesgue measure.
\end{Lemma}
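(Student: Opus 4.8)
The plan is to argue by contradiction, exploiting the continuity of $\vect f$ together with a localized choice of test density. First I would record the elementary fact that for a \emph{continuous} function "vanishing almost everywhere" and "vanishing everywhere" coincide: if $\vect f$ were zero a.e.\ then its zero set would be dense (a set of full Lebesgue measure is dense), and continuity would force $\vect f\equiv \vect 0$. Equivalently, the set $U=\{\vect x\in\mathbb{R}^n : \vect f(\vect x)\neq \vect 0\}$ is open, and if $\vect f$ fails to vanish a.e.\ then $U$ is nonempty. So it suffices to derive a contradiction from the assumption that some $\vect x_0\in U$ exists.

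Given such an $\vect x_0$, some component satisfies $f_i(\vect x_0)\neq 0$, and without loss of generality $f_i(\vect x_0)>0$ (otherwise replace $\vect f$ by $-\vect f$, which satisfies the same hypotheses). By continuity of $f_i$ I would pick a radius $r>0$ such that $f_i(\vect x)\geq \tfrac12 f_i(\vect x_0)>0$ for every $\vect x$ in the ball $B=B(\vect x_0,r)$. Then I would take $g$ to be the uniform density on $B$, namely $g(\vect x)=\mathbf 1_{B}(\vect x)/\mathrm{vol}(B)$, which is a genuine probability density since $\mathrm{vol}(B)\in(0,\infty)$.

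Because $\vect f$ is bounded, the expectation $\mathbb{E}_{\vect x\sim g}[\vect f(\vect x)]$ is well defined, and its $i$-th component equals $\frac{1}{\mathrm{vol}(B)}\int_B f_i(\vect x)\,\mathrm d\vect x\geq \tfrac12 f_i(\vect x_0)>0$. This contradicts the hypothesis that $\mathbb{E}_{\vect x\sim g}[\vect f(\vect x)]=\vect 0$ for \emph{every} probability density $g$. Hence $U=\emptyset$, i.e.\ $\vect f\equiv\vect 0$, and in particular $\vect f=\vect 0$ almost everywhere.

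There is essentially no serious obstacle here; the only points needing a little care are (i) the observation that for continuous functions the a.e.\ statement and the pointwise statement are equivalent, so exhibiting a single point of non-vanishing is enough, and (ii) checking that the test object used really is a probability density and that the relevant integral is finite, both of which follow from boundedness of $\vect f$ and from localizing to a ball of finite positive volume. If one preferred to avoid mentioning Lebesgue measure at all, an alternative is to use a sequence of mollifier-type densities $g_\varepsilon$ concentrating at $\vect x_0$ and pass to the limit $\mathbb{E}_{\vect x\sim g_\varepsilon}[\vect f(\vect x)]\to \vect f(\vect x_0)$, but the single-ball argument above is shorter and self-contained.
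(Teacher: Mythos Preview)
Your argument is correct and follows essentially the same approach as the paper: assume $\vect f$ is not identically zero, localize, and exhibit a single density $g$ for which the expectation is nonzero. The paper's proof is a one-line sketch (``if $\vect f\neq\vect 0$ on a measurable set, there exists a density supported there with nonzero expectation''), whereas you supply the details the paper omits---using continuity to pin down a ball on which a fixed component has a definite sign, and checking that the uniform density on that ball does the job.
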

\begin{proof} If $
    \vect f(\vect x)\neq \vect 0 
$ holds on a measurable set, then there exists a probability density $\tilde g$ on that set such that $
    \mathbb{E}_{\vect x\sim  \tilde g}[\vect f(\vect x)]\neq 0
$
\end{proof}
\begin{theorem}\label{theorem: J''=H}
Under Assumptions \ref{Assup1}-\ref{assum3}, the approximate Hessian $H(\vect\theta)$ converges to the exact Hessian $\nabla^2_{\vect\theta}J(\vect\pi_{\vect\theta})$ at the optimal policy, i.e.,
\begin{align}\label{eq:Lam0}
    \lim_{\vect\theta\rightarrow\vect\theta^\star}\Lambda(\vect\theta)=0.
\end{align}
\end{theorem}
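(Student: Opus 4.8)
The plan is to establish two facts: $\Lambda(\vect\theta^\star)=\vect 0$, and $\Lambda$ is continuous on a neighbourhood of $\vect\theta^\star$; together they give $\lim_{\vect\theta\to\vect\theta^\star}\Lambda(\vect\theta)=\Lambda(\vect\theta^\star)=\vect 0$. The heart of the argument is that, at the optimal parameter, $\nabla_{\vect\theta} V^{\vect\pi_{\vect\theta}}(\vect s)$ vanishes not only in expectation over the initial state distribution but pointwise for almost every $\vect s$. Since this gradient is a factor in both integrands of \eqref{eq:Lam}, the whole expression then collapses to zero, after which continuity promotes the equality at $\vect\theta^\star$ to the stated limit.

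First I would show that $\nabla_{\vect\theta} V^{\vect\pi_{\vect\theta}}(\vect s)\big|_{\vect\theta=\vect\theta^\star}=\vect 0$ for almost every $\vect s$. By Assumption \ref{assum3}.1 the policy class is rich enough that $\vect\pi_{\vect\theta^\star}=\vect\pi^\star$, the globally optimal policy, which is optimal for \emph{every} initial state and hence for every initial state distribution. Consequently, for an arbitrary probability density $g$ on $\mathcal S$, replacing $p_1$ by $g$ keeps $\vect\theta^\star$ a minimizer of $\vect\theta\mapsto\mathbb{E}_{\vect s_0\sim g}[V^{\vect\pi_{\vect\theta}}(\vect s_0)]$, so its first-order stationarity condition reads $\mathbb{E}_{\vect s_0\sim g}\big[\nabla_{\vect\theta} V^{\vect\pi_{\vect\theta}}(\vect s_0)\big|_{\vect\theta^\star}\big]=\vect 0$, where the interchange of $\nabla_{\vect\theta}$ and the expectation is legitimate by the domination bounds of Assumption \ref{Assup1}, exactly as in the derivation of \eqref{eq:dj}. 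Since this holds for every density $g$, and since $\vect s\mapsto \nabla_{\vect\theta} V^{\vect\pi_{\vect\theta}}(\vect s)\big|_{\vect\theta^\star}$ is bounded and continuous — which follows from Assumptions \ref{Assup1}--\ref{Assup1.5} by the same estimates that establish Theorem \ref{th:J''} — Lemma \ref{lemma:f0} applies and yields $\nabla_{\vect\theta} V^{\vect\pi_{\vect\theta}}(\vect s)\big|_{\vect\theta^\star}=\vect 0$ almost everywhere.

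Next I would substitute this into \eqref{eq:Lam} at $\vect\theta^\star$. Each of the two integrands is an outer product in which one factor is $\nabla_{\vect\theta} V^{\vect\pi_{\vect\theta}}(\vect s')\big|_{\vect\theta^\star}$, zero for almost every $\vect s'$, while the other factor $\nabla_{\vect\theta} p(\vect s'|\vect s,\vect\pi_{\vect\theta^\star}(\vect s))=\nabla_{\vect\theta}\vect\pi_{\vect\theta^\star}(\vect s)\,\nabla_{\vect a} p(\vect s'|\vect s,\vect a)\big|_{\vect a=\vect\pi_{\vect\theta^\star}(\vect s)}$ is integrable in $\vect s'$: differentiating $\int p(\vect s'|\vect s,\vect a)\,\mathrm{d}\vect s'=1$ under the integral sign (permitted by Assumption \ref{Assup1}) shows $\nabla_{\vect a} p$ integrates to zero, and it is bounded by $L$. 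Hence each integrand vanishes for almost every $\vect s'$, both inner integrals are zero, and taking the outer expectation $\mathbb{E}_{\vect s}$ of the zero function gives $\Lambda(\vect\theta^\star)=\vect 0$.

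Finally, for the limit I would invoke continuity of $\Lambda$ near $\vect\theta^\star$. By Theorem \ref{th:J''}, $\Lambda(\vect\theta)=\gamma^{-1}\big(\nabla^2_{\vect\theta} J(\vect\theta)-H(\vect\theta)\big)$; Assumption \ref{assum3}.2 makes $\nabla^2_{\vect\theta} J$ continuous on a neighbourhood of $\vect\theta^\star$, and $H(\vect\theta)$ in \eqref{eq:H} is an expectation of terms that are continuous and uniformly bounded under Assumptions \ref{Assup1}--\ref{Assup1.5}, hence continuous. Therefore $\Lambda$ is continuous near $\vect\theta^\star$ and $\lim_{\vect\theta\to\vect\theta^\star}\Lambda(\vect\theta)=\Lambda(\vect\theta^\star)=\vect 0$. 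I expect the main obstacle to be the second step: making rigorous that the optimality of $\vect\pi^\star$ holds uniformly over initial states (so that the perturbed problems with density $g$ are genuinely minimized at $\vect\theta^\star$), justifying differentiation under the expectation, and verifying that $\nabla_{\vect\theta} V^{\vect\pi_{\vect\theta}}\big|_{\vect\theta^\star}$ meets the boundedness and continuity hypotheses needed to apply Lemma \ref{lemma:f0}.
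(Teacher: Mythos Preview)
Your proposal is correct and follows essentially the same route as the paper: use Assumption~\ref{assum3}.1 to conclude that $\vect\theta^\star$ is optimal for every initial state distribution, apply Lemma~\ref{lemma:f0} to deduce $\nabla_{\vect\theta}V^{\vect\pi_{\vect\theta}}(\vect s)\big|_{\vect\theta^\star}=\vect 0$ pointwise, substitute into \eqref{eq:Lam} to get $\Lambda(\vect\theta^\star)=\vect 0$, and finish by continuity. Your treatment is in fact somewhat more careful than the paper's, particularly your continuity argument via $\Lambda=\gamma^{-1}\big(\nabla^2_{\vect\theta}J-H\big)$, whereas the paper simply invokes ``continuity of the Hessian'' without explaining why that carries over to $\Lambda$.
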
 
\begin{proof}
The initial distribution $p_1(\vect s_0)$ is independent of the policy parameters $\vect\theta$. From the optimality condition of \eqref{eq:J}, we have:
\begin{align}
    \nabla_{\vect\theta}J(\vect\theta)=\nabla_{\vect\theta}\mathbb{E}_{\vect s_0}[V^{\vect\pi_{\vect\theta}}(\vect s_0)]=\mathbb{E}_{\vect s_0}[\nabla_{\vect\theta}V^{\vect\pi_{\vect\theta}}(\vect s_0)]=0 \nonumber
\end{align}
at $\vect\theta=\vect\theta^\star$ for any initial distribution $p_1(\vect s_0)$ (Assumption \ref{assum3}.1). Using Lemma \ref{lemma:f0}, it implies  
$
    \nabla_{\vect\theta}V^{\vect\pi_{\vect\theta}}(\vect s)\equiv 0
$
at $\vect\theta=\vect\theta^\star$. Under Assumptions \ref{assum3} and for any bounded $\nabla_{\vect\theta} p$, it reads:
\begin{align}
&\int  \nabla_{\vect \theta} p(\vect s'|\vect s,\vect \pi_{\vect \theta} (\vect s)) \nabla_{\vect \theta} V^{\vect \pi_{\vect \theta}}(\vect s')^\top \mathrm{d}\vect s'=\nonumber\\  &\int \nabla_{\vect \theta} V^{\vect \pi_{\vect \theta}}(\vect s')  \nabla_{\vect \theta} p(\vect s'|\vect s,\vect \pi_{\vect \theta} (\vect s))^\top\mathrm{d}\vect s'=0   
\end{align}
at $\vect\theta=\vect\theta^\star$. Then, from the continuity of the Hessian (Assumption \ref{assum3}.2) and \eqref{eq:Lam}, it implies \eqref{eq:Lam0}. Note that Assumption \ref{Assup1} guarantees the boundedness of $\nabla_{\vect\theta} p$.
\end{proof}
Next theorem provides necessary and sufficient conditions for the superlinear\footnote{The sequence $x_k$ is said to converge superlinearly to $L$ if $\lim_{k\rightarrow\infty} \frac{|x_{k+1}-L|}{|x_{k}-L|}=0$.} convergence of the Quasi-Newton method.
\begin{theorem}\label{th:superlinear}(superlinear convergence of Quasi-Newton methods)
Suppose that $f: \mathbb{R}^n \rightarrow \mathbb{R}$ is twice continuously differetiable. Consider the iteration $x_{k+1}=x_k-B^{-1}_k \nabla f_k$. Let us assume that $\{ x_k \}$ converges to a point such that $\nabla f(x^\star)=0$ and $\nabla^2 f(x^\star)$ is positive definite. Then $\{ x_k \}$ converges superlinearly to $x^\star$ if and only if:
\begin{align}
    \Lim{k\rightarrow \infty} \frac{\| (B_k-\nabla^2 f(x^\star))B^{-1}_k \nabla f_k\|}{\|B^{-1}_k \nabla f_k\|} =0.
\end{align}
\end{theorem}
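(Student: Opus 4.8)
The plan is to run the classical Dennis--Mor\'e argument, which shows that both the superlinear convergence of $\{x_k\}$ and the displayed limit are equivalent to one and the same Taylor identity. Write $H_\star:=\nabla^2 f(x^\star)$, $e_k:=x_k-x^\star$, and $p_k:=x_{k+1}-x_k=-B_k^{-1}\nabla f_k$, so that $\|B_k^{-1}\nabla f_k\|=\|p_k\|$ and the quantity in the theorem is exactly $\|(B_k-H_\star)p_k\|/\|p_k\|$. First I would expand the gradient by the fundamental theorem of calculus: since $\nabla f(x^\star)=0$, one has $\nabla f_k=H_\star e_k+r_k$ with $r_k=\big(\int_0^1[\nabla^2 f(x^\star+te_k)-H_\star]\,\mathrm{d}t\big)e_k$, and because $f$ is $C^2$ and $e_k\to 0$, continuity of $\nabla^2 f$ forces $\|r_k\|=o(\|e_k\|)$. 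Combining this with the positive definiteness of $H_\star$ gives constants $0<m\le M$ and an index $K$ with $m\|e_k\|\le\|\nabla f_k\|\le M\|e_k\|$ for $k\ge K$; in particular (assuming without loss of generality $x_k\neq x^\star$, the case where some iterate hits $x^\star$ being trivial) $\nabla f_k\neq 0$ and $p_k\neq 0$, so the ratio is well defined for large $k$.

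The core of the proof is the identity
\[
(B_k-H_\star)p_k \;=\; -\nabla f_k - H_\star p_k \;=\; -H_\star e_k - r_k - H_\star(e_{k+1}-e_k) \;=\; -H_\star e_{k+1} - r_k ,
\]
where the first equality uses $B_k p_k=-\nabla f_k$, and the remaining ones use the expansion above together with $p_k=e_{k+1}-e_k$. Hence $\|(B_k-H_\star)p_k\|$ lies between $\lambda_{\min}(H_\star)\|e_{k+1}\|-\|r_k\|$ and $\|H_\star\|\,\|e_{k+1}\|+\|r_k\|$, with $\|r_k\|=o(\|e_k\|)$.

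For the \emph{if} direction, assume $\beta_k:=\|(B_k-H_\star)p_k\|/\|p_k\|\to 0$. Using the identity and $\|p_k\|\le\|e_{k+1}\|+\|e_k\|$ we get $\lambda_{\min}(H_\star)\|e_{k+1}\|\le\beta_k\|e_{k+1}\|+\beta_k\|e_k\|+o(\|e_k\|)$, and rearranging (legitimate once $\beta_k<\tfrac12\lambda_{\min}(H_\star)$) yields $\|e_{k+1}\|/\|e_k\|\le(\beta_k+o(1))/(\lambda_{\min}(H_\star)-\beta_k)\to 0$, i.e.\ superlinear convergence. For the \emph{only if} direction, if $\|e_{k+1}\|/\|e_k\|\to 0$ then $p_k=e_{k+1}-e_k$ obeys $\|p_k\|=(1+o(1))\|e_k\|$, so the identity gives $\|(B_k-H_\star)p_k\|\le\|H_\star\|\,\|e_{k+1}\|+\|r_k\|=o(\|e_k\|)=o(\|p_k\|)$, which is the stated limit. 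Together these give the equivalence.

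I expect the only genuine subtlety to be the uniform comparability $\|\nabla f_k\|\asymp\|e_k\|\asymp\|p_k\|$ for $k$ large: this is what makes $o(\|e_k\|)$ and $o(\|p_k\|)$ interchangeable and justifies dividing by $\|p_k\|$, and it rests on confining the analysis to the neighbourhood of $x^\star$ where $\nabla^2 f$ stays positive definite (via the hypotheses that $\nabla^2 f(x^\star)\succ 0$ and $\nabla^2 f$ is continuous) and on excluding the degenerate iterates with $x_k=x^\star$. Everything downstream of the Taylor identity is routine estimation.
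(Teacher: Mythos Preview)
Your argument is correct and is precisely the classical Dennis--Mor\'e proof; the paper itself does not prove this theorem but simply refers to Theorem~3.7 of Nocedal and Wright, where the very same Taylor-expansion identity $(B_k-\nabla^2 f(x^\star))p_k=-\nabla^2 f(x^\star)e_{k+1}-r_k$ and the ensuing two-sided estimates are used. Hence your proposal coincides with the approach the paper defers to.
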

\begin{proof}
See Theorem 3.7 in \cite{nocedal2006numerical}.
\end{proof}
Next corollary concludes that the proposed Hessian implies a superlinear converges.
\begin{Corollary} \textbf{(From theorem \ref{theorem: J''=H} and \ref{th:superlinear}):}
Under assumption \ref{assum3} and the assumptions in the theorem \ref{th:superlinear}, the policy parameters $\vect \theta_k$ converge to the optimal policy parameters $\vect \theta^\star$ superlinearly, when $H(\vect \theta)$ defined in \eqref{eq:H} is an approximator of the exact Hessian \eqref{eq:J''} with $J(\vect\theta)$ defined in \eqref{eq:J} and the Quasi-Newton update rule \eqref{eq:QN} is used.  
\end{Corollary}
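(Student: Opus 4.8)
The plan is to apply Theorem~\ref{th:superlinear} with the identifications $f=J$, $x_k=\vect\theta_k$, and $B_k=H(\vect\theta_k)$, so that the Quasi-Newton rule \eqref{eq:QN} with $\alpha=1$ is exactly the iteration $x_{k+1}=x_k-B_k^{-1}\nabla f_k$ appearing there. The hypotheses of Theorem~\ref{th:superlinear} are imported verbatim into the corollary: $J$ is twice continuously differentiable (Assumptions~\ref{Assup1}--\ref{Assup1.5}), the generated sequence $\{\vect\theta_k\}$ converges to $\vect\theta^\star$, $\nabla_{\vect\theta}J(\vect\theta^\star)=0$ (the first-order optimality condition, already exploited in the proof of Theorem~\ref{theorem: J''=H}), and $\nabla^2_{\vect\theta}J(\vect\theta^\star)\succ 0$. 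What remains is to verify
\[
\lim_{k\to\infty}\frac{\big\|\big(H(\vect\theta_k)-\nabla^2_{\vect\theta}J(\vect\theta^\star)\big)H(\vect\theta_k)^{-1}\nabla_{\vect\theta}J(\vect\theta_k)\big\|}{\big\|H(\vect\theta_k)^{-1}\nabla_{\vect\theta}J(\vect\theta_k)\big\|}=0.
\]

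First I would bound this quotient by the operator norm $\|H(\vect\theta_k)-\nabla^2_{\vect\theta}J(\vect\theta^\star)\|$ via submultiplicativity, which reduces the claim to showing $H(\vect\theta_k)\to\nabla^2_{\vect\theta}J(\vect\theta^\star)$. From the decomposition \eqref{eq:J''}, $\nabla^2_{\vect\theta}J(\vect\theta^\star)=H(\vect\theta^\star)+\gamma\Lambda(\vect\theta^\star)$; Theorem~\ref{theorem: J''=H} gives $\lim_{\vect\theta\to\vect\theta^\star}\Lambda(\vect\theta)=0$, and continuity of $\Lambda$ in $\vect\theta$ — a consequence of Assumptions~\ref{Assup1}--\ref{Assup1.5}, since the integrands in \eqref{eq:Lam} and the discounted closed-loop state distribution vary continuously with $\vect\theta$ — strengthens this to $\Lambda(\vect\theta^\star)=0$, hence $H(\vect\theta^\star)=\nabla^2_{\vect\theta}J(\vect\theta^\star)$. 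Since $H(\vect\theta^\star)=\nabla^2_{\vect\theta}J(\vect\theta^\star)\succ 0$, continuity of $\vect\theta\mapsto H(\vect\theta)$ near $\vect\theta^\star$ — again from Assumptions~\ref{Assup1}--\ref{Assup1.5} applied to \eqref{eq:H} — makes $H(\vect\theta_k)$ invertible (indeed positive definite) for all large $k$, so the step in \eqref{eq:QN} is well defined along the tail of the sequence (consistent with Assumption~\ref{assum3}.2), and moreover $H(\vect\theta_k)\to H(\vect\theta^\star)=\nabla^2_{\vect\theta}J(\vect\theta^\star)$ because $\vect\theta_k\to\vect\theta^\star$. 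The operator-norm bound therefore vanishes, the quotient above tends to $0$, and Theorem~\ref{th:superlinear} yields $\lim_{k\to\infty}\|\vect\theta_{k+1}-\vect\theta^\star\|/\|\vect\theta_k-\vect\theta^\star\|=0$, i.e.\ superlinear convergence.

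I expect the genuine work to be concentrated not in the norm estimate, which is immediate, but in the two continuity claims: that $\Lambda$ and $H$ are continuous in $\vect\theta$ on a neighbourhood of $\vect\theta^\star$. Proving these requires controlling how the discounted state distribution of the closed loop under $\vect\pi_{\vect\theta}$ depends on $\vect\theta$ and then passing the limit $\vect\theta_k\to\vect\theta^\star$ under the expectations and integrals in \eqref{eq:H}--\eqref{eq:Lam}; the uniform bounds $b$, $L$, $M$ of Assumptions~\ref{Assup1}--\ref{Assup1.5}, together with a dominated-convergence argument, are precisely what legitimise that interchange. Everything else is routine bookkeeping.
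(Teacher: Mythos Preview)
Your proposal is correct and follows exactly the route the paper intends: the corollary is stated in the paper without proof, as an immediate consequence of Theorems~\ref{theorem: J''=H} and~\ref{th:superlinear}, and you have simply spelled out that combination --- identify $f=J$, $B_k=H(\vect\theta_k)$, bound the Dennis--Mor\'e quotient by $\|H(\vect\theta_k)-\nabla^2_{\vect\theta}J(\vect\theta^\star)\|$, and use $\Lambda(\vect\theta)\to 0$ plus continuity to conclude. Your extra care about the continuity of $H$ and $\Lambda$ in $\vect\theta$ and the invertibility of $H(\vect\theta_k)$ near $\vect\theta^\star$ makes explicit what the paper leaves implicit, but there is no methodological difference.
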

Natural policy gradient utilizes Fisher information matrix as its approximate Hessian in the policy gradient method. The Fisher matrix for deterministic policies can be written as follows~\cite{Covariant}:
\begin{align}\label{eq:fisher}
 F(\vect \theta)=\mathbb{E}_{\vect s}\left[\nabla_{\vect \theta} \vect \pi_{\vect \theta} (\vect s) \nabla_{\vect \theta} \vect \pi_{\vect \theta}(\vect s)^\top\right].   
\end{align}
The following corollary connects our proposed Hessian with the Fisher Information matrix.
\begin{Corollary}
Fisher Information matrix, defined in \eqref{eq:fisher}, is positive definite and by comparison with \eqref{eq:H} and this matrix can be written equal to \eqref{eq:H} under the following conditions:
\begin{enumerate}
    \item $\nabla^2_{\vect a} {Q}^{\vect \pi_{\vect \theta}} (\vect s,\vect a)|_{\vect a=\vect \pi_{\vect \theta}}=I$,
    \item $\nabla^2_{\vect \theta} \vect \pi_{\vect \theta} (\vect s) \otimes \nabla_{\vect a} {Q}^{\vect \pi_{\vect \theta}} (\vect s,\vect a)|_{\vect a=\vect \pi_{\vect \theta}}=0$.
\end{enumerate}
Then clearly $F(\vect\theta)$ does not converge to the exact Hessian at the optimal policy necessarily. I.e., the parameters will not converge superlinearly to the optimal parameters if the Fisher information matrix is used as a Hessian approximation (see Theorem \ref{th:superlinear}).
\end{Corollary}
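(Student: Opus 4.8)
The plan is to dispatch the three assertions of the corollary in turn, since each is an algebraic consequence of the definitions together with the two theorems already established. First I would establish (semi)definiteness of $F(\vect\theta)$: for any $\vect v\in\mathbb{R}^{n_\theta}$ one has $\vect v^\top F(\vect\theta)\vect v=\mathbb{E}_{\vect s}[\|\nabla_{\vect\theta}\vect\pi_{\vect\theta}(\vect s)^\top\vect v\|^2]\ge 0$, so $F$ is at least positive semidefinite. To upgrade this to positive definiteness I would invoke the richness of the parametrization (Assumption~\ref{assum3}.1): there can be no nonzero $\vect v$ with $\nabla_{\vect\theta}\vect\pi_{\vect\theta}(\vect s)^\top\vect v=\vect 0$ holding on a set of full measure under the discounted state distribution, since otherwise the policy Jacobian would be rank-deficient along $\vect v$ and the parametrization could not realize arbitrary perturbations of $\vect\pi^\star$.

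Second, for the comparison with \eqref{eq:H} I would simply substitute the two stated conditions into the definition of $H(\vect\theta)$. Condition~2 annihilates the first summand of \eqref{eq:H} pointwise in $\vect s$, and condition~1 replaces $\nabla^2_{\vect a}Q^{\vect\pi_{\vect\theta}}(\vect s,\vect a)|_{\vect a=\vect\pi_{\vect\theta}}$ by the identity in the second summand, so that $H(\vect\theta)=\mathbb{E}_{\vect s}[\nabla_{\vect\theta}\vect\pi_{\vect\theta}(\vect s)\nabla_{\vect\theta}\vect\pi_{\vect\theta}(\vect s)^\top]$, which is exactly \eqref{eq:fisher}. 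This direction is purely mechanical.

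Third, for the negative conclusion I would argue by contraposition through Theorem~\ref{theorem: J''=H} and Theorem~\ref{th:superlinear}. By Theorem~\ref{theorem: J''=H} we have $H(\vect\theta^\star)=\nabla^2_{\vect\theta}J(\vect\theta^\star)$; but conditions~1 and~2 fail generically — there is no reason for $\nabla^2_{\vect a}Q^{\vect\pi^\star}$ to equal the identity, nor for the curvature term $\nabla^2_{\vect\theta}\vect\pi_{\vect\theta}\otimes\nabla_{\vect a}Q^{\vect\pi_{\vect\theta}}$ to vanish — so $F(\vect\theta^\star)\neq\nabla^2_{\vect\theta}J(\vect\theta^\star)$ in general. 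Taking $B_k=F(\vect\theta_k)$ in Theorem~\ref{th:superlinear}, continuity gives $B_k-\nabla^2_{\vect\theta}J(\vect\theta^\star)\to F(\vect\theta^\star)-\nabla^2_{\vect\theta}J(\vect\theta^\star)=:E\neq 0$; writing $\vect p_k=B_k^{-1}\nabla J_k$, the superlinear test of Theorem~\ref{th:superlinear} reduces to whether $\|E\,\vect p_k\|/\|\vect p_k\|\to 0$, and since $E\neq 0$ this limit does not vanish in general, violating the necessary condition for superlinear convergence.

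The main obstacle I anticipate is precisely this last step: making ``the limit is not zero'' rigorous requires excluding the degenerate possibility that the normalized search directions $\vect p_k/\|\vect p_k\|$ asymptotically align with $\ker E$, which in principle could occur for special instances. The honest reading of the corollary is therefore that the Fisher approximation does not in general deliver the superlinear rate — in contrast to $H(\vect\theta)$, for which Theorem~\ref{theorem: J''=H} forces $E=0$ — rather than that it provably fails on every problem. A secondary, minor point is the genericity hypothesis implicit in the positive-definiteness claim, which ought to be stated explicitly alongside Assumption~\ref{assum3}.1.
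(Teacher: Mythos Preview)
The paper provides no explicit proof for this corollary; it is stated as an immediate observation following the definitions of $F(\vect\theta)$ in \eqref{eq:fisher} and $H(\vect\theta)$ in \eqref{eq:H}, together with Theorems~\ref{theorem: J''=H} and~\ref{th:superlinear}. Your proposal is correct and supplies precisely the details the paper leaves implicit: the quadratic-form argument for (semi)definiteness, the direct substitution of conditions~1--2 into \eqref{eq:H} to recover \eqref{eq:fisher}, and the invocation of Theorem~\ref{th:superlinear} with $B_k=F(\vect\theta_k)$ for the negative claim.

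Your treatment is in fact more careful than the paper's. You correctly flag that strict positive definiteness of $F$ requires a nondegeneracy condition on $\nabla_{\vect\theta}\vect\pi_{\vect\theta}$ that is not literally implied by Assumption~\ref{assum3}.1 as stated, and you correctly read the final assertion as ``not necessarily superlinear'' rather than ``never superlinear,'' noting that a rigorous \emph{always-fails} claim would require ruling out asymptotic alignment of $\vect p_k/\|\vect p_k\|$ with $\ker E$. The paper does not engage with either subtlety; it simply asserts the corollary. So there is no discrepancy in approach to report --- your argument is the natural one, and the only one the paper's surrounding material supports.
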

\begin{Remark} Under assumptions \ref{Assup1}-\ref{assum3}, $H(\vect\theta)$ is positive definite in a neighborhood of $\vect\theta^\star$. Nevertheless $H(\vect\theta)$ is not necessarily positive definite for a parameter $\vect\theta$ that is far from the optimal parameter $\vect\theta^\star$ because of the term $\nabla_{\vect \theta}^2 \vect \pi_{\vect \theta}(\vect s) \otimes \nabla_{\vect a} Q^{\vect \pi_{\vect \theta}}(\vect s,\vect a)|_{\vect a=\vect \pi_{\vect \theta}}$, while the Fisher information matrix $F(\vect\theta)$ is (semi) positive definite by construction. A regularization of $H$ may be needed in practice, and one can use the Fisher information matrix $F$ to regularize the approximate Hessian $H$, when $H$ is not positive definite. This regularization can be applied using a Hessian in the form of $H+\beta F$ at every step, where $\beta\geq 0$ is a constant that must be ideally selected at every step.  However, other methods e.g.,
trust-region methods can effectively take advantage of
indefinite Hessian approximations.\end{Remark}
\begin{Remark}
Many RL methods deliver a sequence of parameters $\vect\theta_k$ that is stochastic by nature, because they are based on measurements taken from a stochastic system. From the theoretical viewpoint, all of the results in this paper are valid for large data sets, where sample averages converge to the true
expectations. However, in practice, one can use the method to improve the stochastic convergence rate and derive an extension of the current theorems. 
\end{Remark}
\section{Analytical Example}\label{sec:ana}
In this section, we consider a simple Linear Quadratic Regulator  (LQR) problem in order to verify the method analytically. Consider the following scalar linear dynamics:
\begin{align}\label{eq:ex1:dyn}
     s^+= s+ a+ w,
\end{align}
where $w\sim\mathcal{N}(0,\sigma^2)$, i.i.d., $\mathbb{E}_w[wa]=0$ and $\mathbb{E}_w[ws]=0$. Transition probability of the MDP \eqref{eq:ex1:dyn} reads as follows:
\begin{align}\label{eq:ex:p}
    p(s'|s,a)= \frac{1}{\sqrt{2\pi}\sigma}\exp\Big(-\frac{( s'-s-a)^2}{2\sigma^{2}} \Big).
\end{align}
Initial state distribution is $p_1(s_0)\sim\mathcal{N}(0,\sigma_0^2)$ , deterministic policy reads as $\pi_{\theta}=-\theta s$ and stage cost is $\ell(s,a)=0.5(s^2+a^2)$. We assume value function in the from of $V^{\pi_\theta}(s)=p_\theta s^2+q_\theta$ and we show it will satisfy the fundamental Bellman equations \eqref{eq:bellman}, then we have:
\begin{align}
    V^{\pi_\theta}(s)&=\ell(s,\pi_\theta(s))+\gamma \mathbb{E}_w[V^{\pi_\theta}(s-\theta s+w)]\\&=0.5s^2(1+\theta^2)+\gamma(1-\theta)^2p_\theta s^2+\gamma p_\theta \sigma^2+\gamma q_\theta.\nonumber
\end{align}
It implies:
\begin{align}
    p_\theta=\frac{0.5(1+\theta^2)}{1-\gamma(1-\theta)^2}, & \qquad
    q_\theta=\frac{\gamma\sigma^2}{1-\gamma} p_\theta.
\end{align}
Using the Bellman equations \eqref{eq:bellman}, the action-value function $Q^{\pi_\theta}(s,a)$ can be evaluated as follows:
\begin{align}
    Q^{\pi_\theta}(s,a)&=\ell(s,a)+\gamma\mathbb{E}\left[V^{\pi_\theta}(s^+|s,a)\right]\\
    &=0.5(s^2+a^2)+\gamma \mathbb{E} [p_\theta(s+a+w)^2+q_\theta]\nonumber\\&=(0.5+\gamma p_\theta)s^2+2\gamma p_\theta sa+(0.5+\gamma p_\theta)a^2+ q_\theta. \nonumber
\end{align}
One can check the identity $V^{\pi_\theta}(s)=Q^{\pi_\theta}(s,\pi(s))$. Then:
\begin{subequations}
\begin{align}
    \nabla_\theta \pi_\theta \nabla_a Q^{\pi_\theta}(s,a)|_{a=\pi_\theta}&=
    \frac{\gamma\theta^2+\theta-\gamma}{1-\gamma(1-\theta)^2}s^2\label{eq:ex:j'0} \\
      \nabla_\theta \pi_\theta \nabla^2_a Q^{\pi_\theta}(s,a)|_{a=\pi_\theta}\nabla_\theta\pi_\theta &=s^2(1+2\gamma p_\theta).\label{eq:ex:j''0}
\end{align}
\end{subequations}
Note that $\nabla^2_\theta \pi_\theta=0$. The closed-loop performance $J$ reads:
\begin{align}\label{eq:ex:j}
    J(\theta)=\mathbb{E}_{s_0}[V^{\pi_\theta}(s_0)]=\frac{0.5(1+\theta^2)}{1-\gamma(1-\theta)^2}(\sigma_0^2+\frac{\gamma\sigma^2}{1-\gamma}).
\end{align}
Then, by taking derivation of $J$ with respect to the parameters $\vect\theta$:
\begin{align}\label{eq:ex:j'1}
    J'(\theta)= \frac{\gamma\theta^2+\theta-\gamma}{(1-\gamma(1-\theta)^2)^2}(\sigma_0^2+\frac{\gamma\sigma^2}{1-\gamma}).
\end{align}
From policy gradient \eqref{eq:dj} and \eqref{eq:ex:j'0}, we can write:
\begin{align}\label{eq:ex:j'2}
    J'(\theta)&=\mathbb{E}_{s}[\nabla_\theta \pi_\theta \nabla_a Q^{\pi_\theta}(s,a)|_{a=\pi_\theta}]\nonumber\\&=\mathbb{E}_{s}[\frac{\gamma\theta^2+\theta-\gamma}{1-\gamma(1-\theta)^2}s^2].
\end{align}
Then \eqref{eq:ex:j'1} and \eqref{eq:ex:j'2} imply:
\begin{align}
    \mathbb{E}_{s}[s^2]=\frac{(\sigma_0^2+\frac{\gamma\sigma^2}{1-\gamma})}{1-\gamma(1-\theta)^2}.
\end{align}
From \eqref{eq:ex:j}, the exact Hessian of the performance $J$ reads:
\begin{align}\label{eq:exJ''}
    &J''(\theta)=p''_\theta(\sigma_0^2+\frac{\gamma\sigma^2}{1-\gamma})\\&
    =\frac{-2\gamma^2\theta^3-3\gamma\theta^2+6\gamma^2\theta-4\gamma^2+\gamma-1}{(1-\gamma(1-\theta)^2)^3}(\sigma_0^2+\frac{\gamma\sigma^2}{1-\gamma}).\nonumber
\end{align}
From \eqref{eq:H} and \eqref{eq:ex:j''0}, the approximate Hessian $H(\theta)$ reads:
\begin{align}\label{eq:ex:H}
    H(\theta)&=\mathbb{E}_{s}[s^2(1+2\gamma p_\theta)]\nonumber\\&=\frac{1+2\gamma\theta}{(1-\gamma(1-\theta)^2)^2}(\sigma_0^2+\frac{\gamma\sigma^2}{1-\gamma}).
\end{align}
From \eqref{eq:Lam} and \eqref{eq:ex:p}, we can write:
\begin{align}\label{eq:ex:Lam}
    \Lambda(\theta)=&2\int_{\mathcal{S}} \nabla_\theta V^{\pi_\theta}(s') \nabla_{\theta}p( s'| s, \pi_{\theta})\mathrm{d} s'\\=&2\int_{-\infty}^{\infty} -p'_\theta((s')^2+\frac{\gamma\sigma^2}{1-\gamma})\frac{ s( s'- s+ \theta s)}{\sqrt{2\pi}\sigma^{3}}\nonumber\\&\quad \exp\Big(-\frac{( s'-s+\theta s)^2}{2\sigma^{2}} \Big)\mathrm{d} s'=-4p'_\theta s^2(1-\theta)\nonumber\\=&\frac{-4(\gamma\theta^2+\theta-\gamma)(1-\theta)}{(1-\gamma(1-\theta)^2)^3}(\sigma_0^2+\frac{\gamma\sigma^2}{1-\gamma}). \nonumber
\end{align}
Therefore, one can easily verify \eqref{eq:J''} by substitution \eqref{eq:exJ''}, \eqref{eq:ex:H} and \eqref{eq:ex:Lam} in \eqref{eq:J''}. Note that we used the following integration in \eqref{eq:ex:Lam}:
\begin{align}
    \int_{-\infty}^{\infty} (x^2+a)(x-b)\exp(-c(x-b)^2)\mathrm{d}x=\frac{\sqrt{\pi}b}{c^{\frac{3}{2}}},
\end{align}
where $a$, $b$ and $c>0$ are constraints. Fig. \ref{f15} (right) compares the exact Hessian $\nabla^2_\theta J(\theta)$, the proposed approximate Hessian $H(\theta)$ and the Fisher matrix $F(\theta)$ for this example with $\gamma=0.9$ and $\sigma_0^2=\sigma^2=0.1$. As can be seen, $\nabla^2_\theta J$ meets $H(\theta)$ at the optimal parameter. Fig. \ref{f15} (left) shows the superlinear convergence of the policy parameters during the learning using Quasi-Newton policy gradient method, while the (first order) policy gradient method and natural policy gradient method result a linear convergence during the learning.
\begin{figure}[ht!]
	\centering
\includegraphics[width=0.48\textwidth]{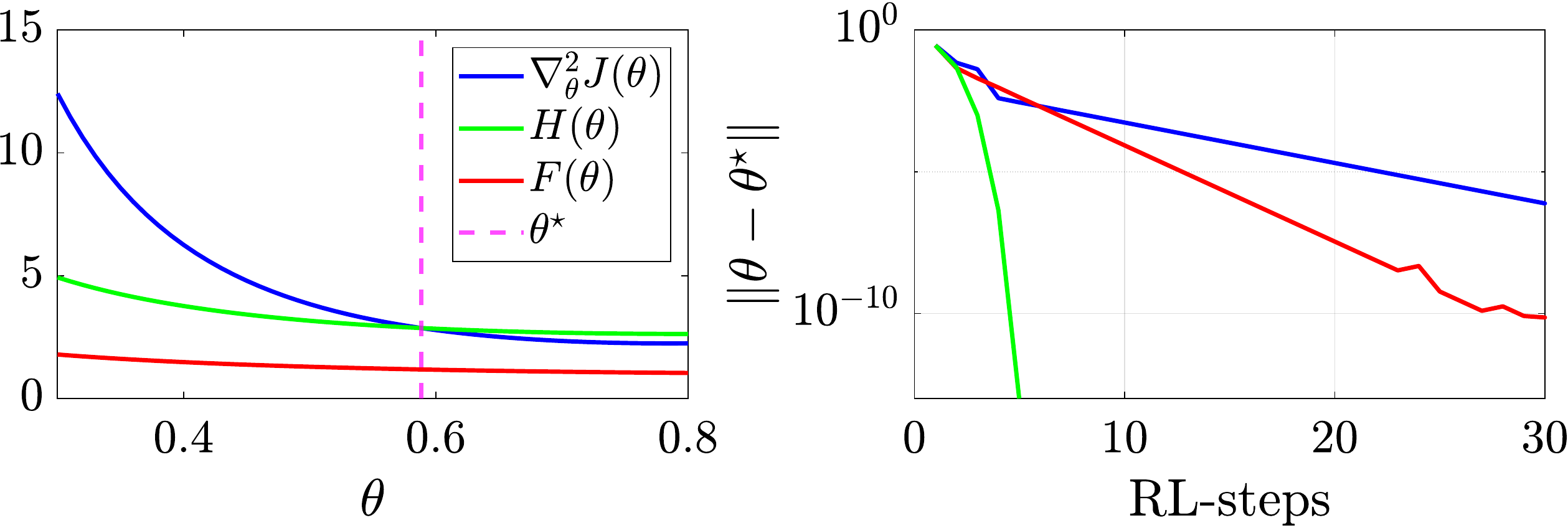}
	\caption{Right: Superlinear convergence of the proposed method. blue: policy gradient method, red: natural policy gradient method, green: proposed method. Left: Comparison of the exact Hessian $\nabla^2_\theta J(\theta)$, the proposed approximate Hessian $H(\theta)$ and the Fisher matrix $F(\theta)$.}
	\label{f15}
\end{figure}
\section{Numerical Simulation}\label{sec:sim}
Cart-Pendulum balancing is a well-known benchmark in the RL community. The dynamics of a cart-pendulum system, shown in fig. \ref{figdyn}, reads as:
\begin{subequations}\label{eq:cp}
\begin{align}
  (M+m)\ddot{x}+\frac{1}{2}ml\ddot{\phi}\cos\phi &=\frac{1}{2}ml\dot{\phi}^2\sin\phi+u,\\
  \frac{1}{3}ml^2\ddot{\phi}+\frac{1}{2}ml\ddot{x}\cos\phi &=-\frac{1}{2}mgl\sin\phi,
\end{align}
\end{subequations}
where $M$ and $m$ are the cart mass and pendulum mass, respectively, $l$ is the pendulum length and $\phi$ is its angle from the vertical axis. Force $u$ is the control input, $x$ is the cart displacement and $g$ is gravity. We used the Runge-Kutta $4^{\mathrm{th}}$-order method to discretize \eqref{eq:cp} with a sampling time $\mathrm{d}t=0.1\mathrm{s}$ and cast it in the form of $\vect s^+=\vect f(\vect s,\vect a)+\vect\xi$, where $\vect s=[\dot x,x,\dot \phi,\phi]^\top$ is the state, $\vect a=u$ is the input, $\vect\xi$ is a Gaussian noise and $\vect f$ is a nonlinear function representing \eqref{eq:cp} in discrete time. A stabilizing quadratic stage cost is considered as $\ell(\vect s,\vect a)=\vect s^\top\vect s+0.01\vect a^\top \vect a$, and the deterministic policy is considered in the form of $\vect\pi_{\vect\theta}=-\vect\theta\vect s$.
\begin{figure}[ht!]
	\centering
	{\def\svgwidth{0.39\textwidth}
			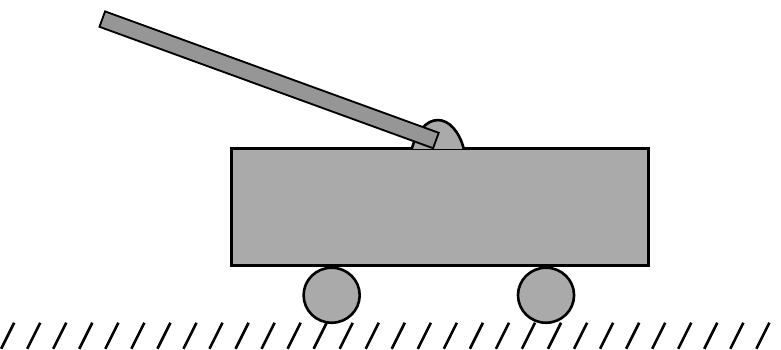
			}
	\caption{The cart-pendulum system. We use $M=0.5\mathrm{kg}$, $m=0.2\mathrm{kg}$, $l=0.3\mathrm{m}$ and $g=9.8\mathrm{m/s^2}$ for the simulation.}
	\label{figdyn}
\end{figure}
Fig. \ref{f2} (right) shows the closed-loop performance $J$ using the proposed Hessian $H(\vect\theta)$ (green) and natural policy gradient method (red). Moreover, the deterministic policy parameters $\vect\theta$ is shown in fig. \ref{f2} (left). 
\begin{figure}[ht!]
	\centering
\includegraphics[width=0.48\textwidth]{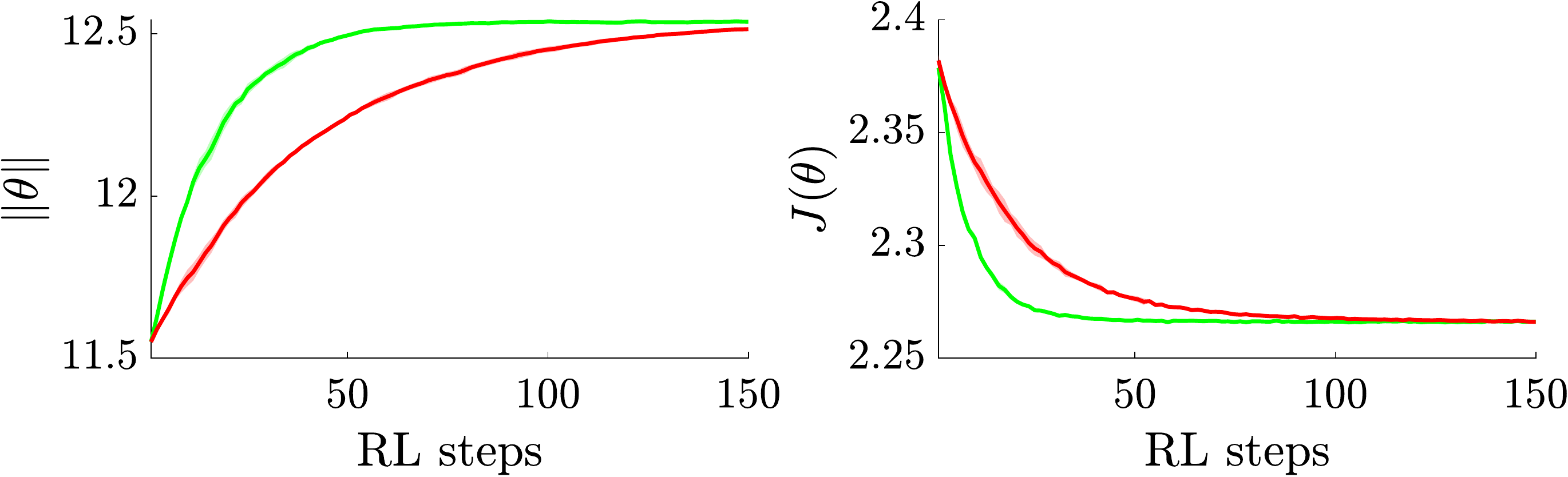}
	\caption{Right: Closed-loop performance $J(\vect\theta)$; Left: Convergence of the policy parameters $\vect\theta$ using the proposed Hessian (green) and natural policy gradient method (red).}
	\label{f2}
\end{figure}
\section{Conclusion}\label{sec:conc}
In this work, we provided a Hessian approximation for the performance of deterministic policies. We use the model-independent terms of the exact Hessian as an approximate Hessian, and we showed that the resulting approximate Hessian converges to the exact Hessian at the optimal policy. Therefore, the approximate Hessian can be used in the Quasi-Newton optimization to provide a superlinear convergence. We analytically verified our formulation in a simple example, and we compare our method with the natural policy gradient in a cart-pendulum system. In the future, we will investigate actor-critic algorithms for the proposed Hessian.
\typeout{}
 \bibliographystyle{IEEEtran}
\bibliography{ACC2022}
\appendix
    \section*{Proof of Theorem \ref{th:J''} }\label{appendix:D.P.H}
\setcounter{equation}{0}
\renewcommand{\theequation}{A.\arabic{equation}}
\begin{proof} 
We first calculate the Hessian of $V^{\vect\pi_{\vect\theta}}(\vect s)$ as follows:
\begin{align} \label{eq:V''}
& \nabla^2_{\vect\theta} V^{\vect\pi_{\vect\theta}}(\vect s) = \nabla^2_{\vect\theta} Q^{\vect\pi_{\vect\theta}}(\vect s,\vect a)|_{\vect a=\vect \pi_{\vect \theta}(\vect s)} = \nonumber\\& \nabla^2_{\vect \theta} \left( \ell(\vect s,\vect \pi_{\vect \theta}(\vect s))+ \int_{\mathcal{S}} \gamma p(\vect s'|\vect s,\vect \pi_{\vect \theta}(\vect s)) V^{\vect \pi_{\vect \theta}}(\vect s')\mathrm{d}\vect s'\right)= \nonumber\\
&\qquad \nabla^2_{\vect \theta} \vect \pi_{\vect \theta} (\vect s) \otimes \nabla_{\vect a} \ell(\vect s,\vect a)|_{\vect a=\vect \pi_{\vect \theta}(\vect s)}+\nonumber\\&\qquad \nabla_{\vect \theta} \vect \pi_{\vect \theta}(\vect s) \nabla^2_{\vect a} \ell(\vect s,\vect a)|_{\vect a=\vect \pi_{\vect \theta}}\nabla_{\vect \theta} \vect \pi_{\vect \theta}(\vect s)^\top+ \nonumber\\&\qquad \nabla^2_{\vect \theta} \int_{\mathcal{S}} \gamma p(\vect s'|\vect s,\vect a) V^{\vect \pi_{\vect \theta}}(\vect s')\mathrm{d}\vect s'
\end{align}
The third term can be calculated as follows:
\begin{align}
 \nabla^2_{\vect \theta} & \int_{\mathcal{S}} \gamma p(\vect s'|\vect s,\vect a) V^{\vect \pi_{\vect \theta}}(\vect s')\mathrm{d}\vect s'= \nonumber\\&   \int_{\mathcal{S}} \gamma V^{\vect \pi_{\vect \theta}}(\vect s') \nabla^2_{\vect \theta} p(\vect s'|\vect s,\vect \pi_{\vect \theta}(\vect s)) \mathrm{d}\vect s'+\nonumber\\ & \int_{\mathcal{S}} \gamma \nabla_{\vect \theta} p(\vect s'|\vect s,\vect \pi_{\vect \theta}(\vect s)) \nabla_{\vect \theta} V^{\vect \pi_{\vect \theta}}(\vect s')^\top \mathrm{d}\vect s'+\nonumber\\& \int_{\mathcal{S}} \gamma \nabla_{\vect \theta} V^{\vect \pi_{\vect \theta}}(\vect s') \nabla_{\vect \theta} p(\vect s'|\vect s,\vect \pi_{\vect \theta}(\vect s))^\top \mathrm{d}\vect s'+ \nonumber \\ &
\int_{\mathcal{S}}\gamma p(\vect s'|\vect s,\vect \pi_{\vect \theta}(\vect s)) \nabla^2_{\vect \theta} V^{\vect \pi_{\vect \theta}}(\vect s')\mathrm{d}\vect s'
\end{align}
The first term can be extended as follows:
\begin{align}
 & \int_{\mathcal{S}} \gamma V^{\vect \pi_{\vect \theta}}(\vect s') \nabla^2_{\vect \theta} p(\vect s'|\vect s,\vect \pi_{\vect \theta}(\vect s)) \mathrm{d}\vect s'=  \\ & \int_{\mathcal{S}} \gamma  V^{\vect \pi_{\vect \theta}} (\vect s') \nabla^2_{\vect \theta} \vect \pi_{\vect \theta} (\vect s) \otimes \nabla_{\vect a} p(\vect s'|\vect s,\vect a)|_{\vect a=\vect \pi_{\vect \theta}(\vect s)}\mathrm{d}\vect s'+ \nonumber\\ &
 \int_{\mathcal{S}} \gamma V^{\vect \pi_{\vect \theta}} (\vect s') \nabla_{\vect \theta} \vect \pi_{\vect \theta} (\vect s) \nabla^2_{\vect a} p(\vect s'|\vect s,\vect a)|_{\vect a=\vect \pi_{\vect \theta}(\vect s)} \nabla_{\vect \theta} \vect \pi_{\vect \theta}(\vect s) ^\top \mathrm{d}\vect s'\nonumber
\end{align}
By rearranging \eqref{eq:V''}, we can write:
\begin{align} \label{eq:V'':1}
& \nabla^2_{\vect \theta} V^{\vect \pi_{\vect \theta}}(\vect s)= \nabla^2_{\vect \theta} \vect \pi_{\vect \theta} (\vect s) \otimes \nabla_{\vect a} ( \ell(\vect s,\vect a)|_{\vect a=\vect \pi_{\vect \theta}(\vect s)}+  \nonumber\\ & \int_{\mathcal{S}} \gamma p(\vect s'|\vect s,\vect a)|_{\vect a=\vect \pi_{\vect \theta}(\vect s)} V^{\vect \pi_{\vect \theta}}(\vect s')\mathrm{d}\vect s') +\nonumber \\ &  \nabla_{\vect \theta} \vect \pi_{\vect \theta} (\vect s) \nabla^2_{\vect a} ( \ell(\vect s,\vect a)|_{\vect a=\vect \pi_{\vect \theta}(\vect s)}+  \nonumber\\ & \int_{\mathcal{S}} \gamma p(\vect s'|\vect s,\vect a)|_{\vect a=\vect \pi_{\vect \theta}(\vect s)} V^{\vect \pi_{\vect \theta}}(\vect s')\mathrm{d}\vect s') \nabla_{\vect \theta} \vect \pi_{\vect \theta} (\vect s) ^\top+ \nonumber \\ &
\int_{\mathcal{S}} \gamma \nabla_{\vect \theta} p(\vect s'|\vect s,\vect \pi_{\vect \theta}(\vect s)) \nabla_{\vect \theta} V^{\vect \pi_{\vect \theta}}(\vect s')^\top \mathrm{d}\vect s'+ \nonumber\\ & \int_{\mathcal{S}} \gamma \nabla_{\vect \theta} V^{\vect \pi_{\vect \theta}}(\vect s') \nabla_{\vect \theta} p(\vect s'|\vect s,\vect \pi_{\vect \theta}(\vect s))^\top \mathrm{d}\vect s'+ \nonumber \\ &
 \int_{\mathcal{S}} \gamma p(\vect s'|\vect s,\vect \pi_{\vect \theta}(\vect s)) \nabla^2_{\vect \theta} V^{\vect \pi_{\vect \theta}}(\vect s')\mathrm{d}\vect s'=  \nonumber\\ &
 \mathcal{F}_{\vect \theta}(\vect s)+\int_{\mathcal{S}} \gamma p(\vect s'|\vect s,\vect \pi_{\vect \theta}(\vect s)) \nabla^2_{\vect \theta} V^{\vect \pi_{\vect \theta}}(\vect s')\mathrm{d}\vect s'
\end{align}
where $\mathcal{F}_{\vect \theta}(\vect s)$ is defined as follows:
\begin{align}
     \mathcal{F}_{\vect \theta}(\vect s) \triangleq & \nabla^2_{\vect \theta} \vect \pi_{\vect \theta} (\vect s) \otimes \nabla_{\vect a} Q^{\vect \pi_{\vect \theta}}(\vect s,\vect a)|_{\vect a=\vect \pi_{\vect \theta}(\vect s)} +\nonumber \\ &  \nabla_{\vect \theta} \vect \pi_{\vect \theta} (\vect s) \nabla^2_{\vect a} Q^{\vect \pi_{\vect \theta}}(\vect s,\vect a)|_{\vect a=\vect \pi_{\vect \theta}(\vect s)} \nabla_{\vect \theta} \vect \pi_{\vect \theta} (\vect s) ^\top+ \nonumber \\ &
\int_{\mathcal{S}} \gamma \nabla_{\vect \theta} p(\vect s'|\vect s,\vect \pi_{\vect \theta}(\vect s)) \nabla_{\vect \theta} V^{\vect \pi_{\vect \theta}}(\vect s')^\top \mathrm{d}\vect s'+ \nonumber \\ & \int_{\mathcal{S}} \gamma \nabla_{\vect \theta} V^{\vect \pi_{\vect \theta}}(\vect s') \nabla_{\vect \theta} p(\vect s'|\vect s,\vect \pi_{\vect \theta}(\vect s))^\top \mathrm{d}\vect s'
\end{align}
where we used:
\begin{align}
  Q^{\vect \pi_{\vect \theta}}(\vect s,\vect a)= &  \ell(\vect s,\vect a)|_{\vect a=\vect \pi_{\vect \theta}(\vect s)}+ \nonumber\\ & \int_{\mathcal{S}} \gamma p(\vect s'|\vect s,\vect a)|_{\vect a=\vect \pi_{\vect \theta}(\vect s)} V^{\vect \pi_{\vect \theta}}(\vect s')\mathrm{d}\vect s'  
\end{align}
Now, we can go one step further for the last term of \eqref{eq:V'':1}:
\begin{align} \label{eq:V'':2}
    &\nabla^2_{\vect \theta} V^{\vect \pi_{\vect \theta}}(\vect s)= \mathcal{F}_{\vect \theta}(\vect s)+\int_{\mathcal{S}} \gamma p(\vect s'|\vect s,\vect \pi_{\vect \theta}(\vect s)) \mathcal{F}_{\vect \theta}(\vect s') \mathrm{d}\vect s'  +\nonumber\\ &\int_{\mathcal{S}} \int_{\mathcal{S}} \gamma^2 p(\vect s'|\vect s,\vect \pi_{\vect \theta}(\vect s))p(\vect s''|\vect s',\vect \pi_{\vect \theta}(\vect s')) \nabla^2_{\vect \theta} V^{\vect \pi_{\vect \theta}}(\vect s'')\mathrm{d}\vect s'\mathrm{d}\vect s''
\end{align}
where we have used the following equality:
\begin{align} 
\nabla^2_{\vect \theta} V^{\vect \pi_{\vect \theta}}(\vect s')=&\mathcal{F}_{\vect \theta}(\vect s')+\\&\int_{\mathcal{S}} \gamma p(\vect s''|\vect s',\vect \pi_{\vect \theta}(\vect s')) \nabla^2_{\vect \theta} V^{\vect \pi_{\vect \theta}}(\vect s'')\mathrm{d}\vect s''\nonumber
\end{align}
We can define:
\begin{align}
  p(\vect s\rightarrow \vect s'',2,\vect \pi_{\vect \theta})=\int_{\mathcal{S}} p(\vect s'|\vect s,\vect \pi_{\vect \theta}(\vect s))p(\vect s''|\vect s',\vect \pi_{\vect \theta}(\vect s'))\mathrm{d}\vect s'  \nonumber
\end{align}
and interpret it probability of transition from $\vect s$ to $\vect s''$ in $2$ steps by policy $\vect \pi_{\vect \theta}$. Then in last term we can alter integral notation $\vect s'' \rightarrow \vect s'$ and rewrite \eqref{eq:V'':2} as follows:
\begin{align}
       \nabla^2_{\vect \theta} V^{\vect \pi_{\vect \theta}}&(\vect s)= \mathcal{F}_{\vect \theta}(\vect s)+\int_{\mathcal{S}} \gamma p(\vect s'|\vect s,\vect \pi_{\vect \theta}(\vect s)) \mathcal{F}_{\vect \theta}(\vect s') \mathrm{d}\vect s'+ \nonumber \\ & \int_{\mathcal{S}} \gamma^2  p(\vect s\rightarrow \vect s',2,\vect \pi_{\vect \theta}) \nabla^2_{\vect \theta} V^{\vect \pi_{\vect \theta}}(\vect s')\mathrm{d}\vect s'
\end{align}
By continuing this procedure, we have:
\begin{align}
 \nabla^2_{\vect\theta} V^{\vect\pi_{\vect\theta}}(\vect s)&=
\int_{\mathcal{S}} \sum^\infty_{t=0} \gamma^t p(\vect s\rightarrow \vect s',t,\vect \pi_{\vect\theta}) \mathcal{F}_{\vect\theta}(\vect s') \mathrm{d}\vect s'
\end{align}
where
\begin{align}
  p(\vect s\rightarrow \vect s',t,\vect \pi_{\vect \theta})=\int_{\mathcal{S}} p(\vect s\rightarrow \hat{\vect s},t-1,\vect \pi_{\vect \theta})p(\vect s'|\hat{\vect s},\vect \pi_{\vect \theta}(\hat{\vect s}))\mathrm{d}\hat{\vect s}  \nonumber
\end{align}
starting from $p(\vect s\rightarrow \vect s',1,\vect \pi_{\vect \theta})=p(\vect s'|\vect s,\vect \pi_{\vect \theta}(\vect s))$.
Then, tacking the expectation over $p_1$ for Hessian of policy we have:
\begin{align}
\nabla^2_{\vect \theta} J(\vect \theta)&= \nabla^2_{\vect \theta} \int_{\mathcal{S}} p_1(\vect s)V^{\vect \pi_{\vect \theta}} (\vect s)\mathrm{d}\vect s= \\ & \int_{\mathcal{S}} p_1(\vect s) \nabla^2_{\vect \theta} V^{\vect \pi_{\vect \theta}} (\vect s)\mathrm{d}\vect s= \nonumber \\ & \int_{\mathcal{S}} \int_{\mathcal{S}} \sum^{\infty}_{t=0} \gamma^t p_1(\vect s) p(\vect s \rightarrow \vect s',t,\vect \pi_{\vect \theta})  \nonumber\\ &\Big[ \nabla_{\vect \theta}^2 \vect \pi_{\vect \theta}(\vect s')\otimes\nabla_{\vect a} Q_{\vect \pi_{\vect \theta}}(\vect s',\vect a)|_{\vect a=\vect \pi_{\vect \theta}(\vect s')}+ \nonumber \\ & \nabla_{\vect \theta} \vect \pi_{\vect \theta}(\vect s')\nabla_{\vect a}^2 Q^{\vect \pi_{\vect \theta}}(\vect s',\vect a)|_{\vect a=\vect \pi_{\vect \theta}(\vect s')}\nabla_{\vect \theta} \vect \pi_{\vect \theta}(\vect s')^\top + \nonumber\\ &
  \int_{\mathcal{S}} \gamma \nabla_{\vect \theta} p(\vect s''|\vect s',\vect \pi_{\vect \theta}(\vect s')) \nabla_{\vect \theta} V^{\vect \pi_{\vect \theta}}(\vect s'')^\top \mathrm{d}\vect s'' + \nonumber \\ & \int_{\mathcal{S}} \gamma \nabla_{\vect \theta} V^{\vect \pi_{\vect \theta}}(\vect s'') \nabla_{\vect \theta} p(\vect s''|\vect s',\vect \pi_{\vect \theta}(\vect s'))^\top \mathrm{d}\vect s'' \Big]\mathrm{d}\vect s'\mathrm{d}\vect s \nonumber
  \end{align}
Or equivalently:
\begin{align}
&\nabla^2_{\vect \theta} J(\vect \theta)= \mathbb{E}_{\vect s} \Big[ \nabla_{\vect \theta}^2 \vect \pi_{\vect \theta}(\vect s) \otimes \nabla_{\vect a} Q^{\vect \pi_{\vect \theta}}(\vect s,\vect a)|_{\vect a=\vect \pi_{\vect \theta}}+ \nonumber \\ & \nabla_{\vect \theta} \vect \pi_{\vect \theta}(\vect s)\nabla_{\vect a}^2 Q^{\vect \pi_{\vect \theta}}(\vect s,\vect a)|_{\vect a=\vect \pi_{\vect \theta}}\nabla_{\vect \theta} \vect \pi_{\vect \theta}(\vect s)^\top + \nonumber\\ &
  \int \gamma \nabla_{\vect \theta} V^{\vect \pi_{\vect \theta}}(\vect s') \nabla_{\vect \theta} p(\vect s'|\vect s,\vect \pi_{\vect \theta} (\vect s))^\top \mathrm{d}\vect s'+ \nonumber \\ & \int \gamma \nabla_{\vect \theta} p(\vect s'|\vect s,\vect \pi_{\vect \theta} (\vect s)) \nabla_{\vect \theta} V^{\vect \pi_{\vect \theta}}(\vect s')^\top \mathrm{d}\vect s' \Big]
\end{align}
where $\mathbb{E}_{\vect s}[\cdot]$ is taken over discounted state distribution of the Markov chain in closed-loop with policy $\vect\pi_{\vect\theta}$.
\end{proof}
\end{document}